%-----------------------------------------------------------------------
% Template File for Science China Information Sciences
% Downloaded from http://scis.scichina.com
%-----------------------------------------------------------------------
% Template File for Science China Information Sciences
% Downloaded from http://scis.scichina.com
% Please compile the tex file using LATEX or PDF-LATEX or CCT-LATEX
%-----------------------------------------------------------------------

\documentclass{SCIS2021}
%%%%%%%%%%%%%%%%%%%%%%%%%%%%%%%%%%%%%%%%%%%%%%%%%%%%%%%
%%% Author's definitions for this manuscript
%%%%%%%%%%%%%%%%%%%%%%%%%%%%%%%%%%%%%%%%%%%%%%%%%%%%%%%

\begin{document}
%\oa
%%%%%%%%%%%%%%%%%%%%%%%%%%%%%%%%%%%%%%%%%%%%%%%%%%%%%%%
%%% Authors do not modify the information below
\ArticleType{RESEARCH PAPER}
%\SpecialTopic{}
%\luntan
\Year{2020}
\Month{}
\Vol{}
\No{}
\DOI{}
\ArtNo{}
\ReceiveDate{}
\ReviseDate{}
\AcceptDate{}
\OnlineDate{}
%%%%%%%%%%%%%%%%%%%%%%%%%%%%%%%%%%%%%%%%%%%%%%%%%%%%%%%
	
%%% title
%%%   \title{title}{title for citation}
\title{On the Robustness of Median Sampling\\in Noisy Evolutionary Optimization}{On the Robustness of Median Sampling in Noisy Evolutionary Optimization}

%%% Corresponding author
%%%   \author[number]{Full name}{{email@xxx.com}}
%%% General author
%%%   \author[number]{Full name}{}
\author[1]{Chao BIAN}{}
\author[1]{Chao QIAN}{{qianc@lamda.nju.edu.cn}}
\author[1]{Yang YU}{}
\author[2]{Ke TANG}{}

%%% Author information for page head
\AuthorMark{Bian C}

%%% Authors for citation
\AuthorCitation{Bian C, Qian C, Yu Y, et al}

%%% Authors' contribution
%\contributions{Authors A and B have the same contribution to this work.}

%%% Address
%%%   \address[number]{Affiliation, City {\rm Postcode}, Country}
\address[1]{State Key Laboratory for Novel Software Technology, Nanjing University, Nanjing {\rm 210023}, China}
\address[2]{Shenzhen Key Laboratory of Computational Intelligence, Department of Computer Science and Engineering, \\Southern University of Science and Technology, Shenzhen {\rm 518055}, China}

%%% Abstract 
\abstract{Evolutionary algorithms (EAs) are a sort of nature-inspired metaheuristics, which have wide applications in various practical optimization problems. In these problems, objective evaluations are usually inaccurate, because noise is almost inevitable in real world, and it is a crucial issue to weaken the negative effect caused by noise. Sampling is a popular strategy, which evaluates the objective a couple of times, and employs the mean of these evaluation results as an estimate of the objective value. In this work, we introduce a novel sampling method, median sampling, into EAs, and illustrate its properties and usefulness theoretically by solving OneMax, the problem of maximizing the number of 1s in a bit string. 
Instead of the mean, median sampling employs the median of the evaluation results as an estimate. Through rigorous theoretical analysis on OneMax under the commonly used onebit noise, we show that median sampling reduces the expected runtime exponentially. Next, through two special noise models, we show that when the 2-quantile of the noisy fitness increases with the true fitness,  median sampling can be better than mean sampling; otherwise, it may fail and mean sampling can be better. The results may guide us to employ median sampling properly in practical applications.}

\keywords{Evolutionary algorithms, noisy optimization, median sampling, computational complexity, runtime analysis}

\maketitle

%%%%%%%%%%%%%%%%%%%%%%%%%%%%%%%%%%%%%%%%%%%%%%%%%%%%%%%
%%% The main text
%%%%%%%%%%%%%%%%%%%%%%%%%%%%%%%%%%%%%%%%%%%%%%%%%%%%%%%
\section{Introduction}
As a kind of general-purpose optimization algorithms, evolutionary algorithms (EAs)~\cite{1} have wide applications in practical optimization problems~\cite{2,3}.
During the optimization procedure, the obtained objective (i.e., fitness) value is usually inaccurate because of noise~\cite{4}. For example, in machine learning, the estimated performance of a prediction model usually deviates from the true performance because the model is evaluated on a limited amount of data; in aerodynamic design, the computational fluid dynamics (CFDs) simulation is needed to evaluate the performance of a given structure, which is usually computationally expensive and approximated, leading to noisy fitness. The existence of noise may mislead the search direction and deteriorate the efficiency of EAs. Therefore, it is important to handle noise in fitness evaluation during evolutionary optimization.

The sampling strategy independently evaluates the fitness $m$ times, where $m$ is the sample size, and then the mean of these samples is used to estimate the exact fitness. Sampling is very popular to tackle noise, because it has a $m$-fold reduction in the variance of the noisy evaluation. Meanwhile, it also has a $m$-fold increase in the computation time, thus some variants are proposed, including adaptive sampling~\cite{5,6} and sequential sampling~\cite{7,8}, which decide the value of $m$ dynamically in each generation. However, there has been a great lack of the theoretical understanding for sampling.

Runtime analysis, an important theoretical aspect for EAs, has achieved a lot of progresses~\cite{9,10,11,12,13,14} recently. However, they mainly consider exact environments, and the results on noisy evolutionary optimization is rare. Noise increases the randomness in the optimization procedure, making the analysis more difficult. As a representative evolutionary algorithm, (1+1)-EA maintains one solution in the population, and generates a new solution in each iteration by mutating the parent solution. It was first studied on two frequently-used pseudo-Boolean problems, OneMax (OM) and LeadingOnes (LO). The goal of OM is maximizing the number of 1s in a solution, while the goal of LO is maximizing the number of continuous 1s from the first bit in a solution. 
Runtime analysis for the two problems under various noise models~\cite{15,16,17,18,19,20} showed that only if the noise level is low, (1+1)-EA can quickly find the optimum. For instance, onebit noise is a frequently-used noise model in theoretical analysis. With probability $p$, it changes a uniformly selected bit in a solution before evaluation, leading to a random fitness value. For OM of size $n$ under onebit noise, the expected runtime (ERT) of (1+1)-EA is superpolynomial if $p=\omega(\log n/n)$. There are also some studies concerning the effectiveness of various strategies to tackle noise, e.g., threshold selection~\cite{19,21,22}, populations~\cite{16,18,20,23,24} and  sampling~\cite{25,26,27}. For instance, if $\mu=\Theta(\log n)$, the ERT of $(\mu+1)$-EA optimizing OM under onebit noise is polynomial for any $p\in [0,1]$ (note that $p$ denotes the noise probability). Several works also show the robustness of the compact genetic algorithm~\cite{28} and a simple ant colony optimization algorithm \cite{29,30,31,32} against noise.

The above mentioned runtime analyses concerning sampling~\cite{25,26,27} revealed that the exponential runtime under high noise levels can be turned to be polynomial by sampling, and the sample size may be critical to the effectiveness of sampling. Moreover, Akimoto et al.~\cite{33} showed that 
optimization under unbiased noise can perform like exact optimization, if the sample size $m$ is large enough. In these works, the sampling strategy utilizes the mean of the samples as an approximation of the true fitness. Then a natural question is whether other information of the samples can be used to make EAs more robust against noise. 

Note that mean is actually a measure of central tendency, and thus, it is straightforward to use another widely known measure \textit{median}. Compared to mean, median has the advantage of being insensitive to outliers. 
For example, ``breakdown point"~\cite{34,35} is a commonly used indicator for insensitivity, which denotes the minimum ratio of variables that need to be contaminated to make the estimator become infinite (i.e., cause breakdown).
The breakdown point of mean is close to 0  because a single bad observation can make the mean become infinite, whereas the breakdown point of median is 0.5 because median becomes infinite only if more than 50\% of the variables become infinite. In fact, economists use the sample median frequently when reporting statistics concerning certain economic measures, e.g., household income~\cite{36}. 

In this paper, we introduce the sampling strategy using median (called \textit{median sampling}) into EAs and theoretically examine its effectiveness. Instead of taking the mean, median sampling takes the median of the samples as an estimate for the fitness. In order to better distinguish the two sampling strategies, we call the original sampling strategy \textit{mean sampling} in the following context. We will consider (1+1)-EA solving noisy OM, and derive the ERT for reaching the optimum (with respect to the exact objective).
Following is our main results:

\begin{itemize}
	\item  For OM under onebit noise with any $p\in [0,1]$, we prove that the ERT of (1+1)-EA is polynomial when median sampling with $m=2n^3+1$ is used. Previous analysis~\cite{17} has proved that the ERT of (1+1)-EA is polynomial only if $p=O(\log n/n)$. Thus, the result shows the robustness of median sampling against noise.
	\item  For OM under segmented noise, we show that the ERT of (1+1)-EA using median sampling is polynomial, while the ERT of (1+1)-EA using mean sampling is exponential.
	 The results show that median sampling can be a better choice, if the 2-quantile of the noisy fitness increases with the true fitness. Note that the noisy fitness is a random variable, and the 2-quantile of a random variable $X$ is the value $a$ satisfying $\mathrm{P}(X\le a)\ge 0.5$ and $\mathrm{P}(X\ge a)\ge 0.5$ (i.e., the median of $X$).
	\item For OM under partial noise, we show that (1+1)-EA employing median sampling fails, while (1+1)-EA employing mean sampling works. 
	The results suggest that it would be better to choose other strategies if the 2-quantile of the noisy fitness doesn't increase with the true fitness.
\end{itemize}
Note that in parallel with our work, Doerr and Sutton~\cite{37} showed that median sampling can handle the negative impact of noise for an integer valued objective function $f$, if $f^{\mathrm{n}}(x)$ satisfies the $\epsilon$-concentrate condition, that is, $\mathrm{P}(f^{\mathrm{n}}(x)-f(x)\ge 0.5)\le 0.5-\epsilon$ and $\mathrm{P}(f^{\mathrm{n}}(x)-f(x)\le -0.5)\le 0.5-\epsilon$, where $f^{\mathrm{n}}(x)$ denotes the noisy objective value of $x$. They also considered two specific cases to show the superiority of  median sampling over mean sampling. For OM under additive Cauchy noise with parameter $\gamma\ge 0.5$, they showed that the runtime of (1+1)-EA is superpolynomial w.h.p. (with high probability) if mean sampling is used, and the runtime is polynomial w.h.p. if median sampling is used. For LO under bitwise noise $(p,q)$~\cite{27} satisfying $p=0.5-\epsilon \wedge q=\Omega(1)$, they showed a  superpolynomial ERT for (1+1)-EA if mean sampling with $m=O(n/\log^2n)$ is used, and the runtime is polynomial w.h.p. if median sampling with $m=O(\log n)$ is used.

The remaining paper is presented as follows. First, Section~2 presents preliminaries. Then, Section~3 analyzes the effectiveness of median sampling. Next, Sections~4 and~5 compare median sampling with mean sampling, and Section~6 provides some guidance for employing median sampling in practice. Finally, Section~7 makes a conclusion.

\section{Preliminaries}
We first present the OM problem as well as (1+1)-EA which will be considered in this paper. Next, we present the sampling strategy. The analysis tool is presented in the end.

\subsection{OneMax Problem}
We consider the frequently-used pseudo-Boolean function OM. Its goal is maximizing the number of 1s (namely, the bits with value 1) in a solution. Note that 11...1 (denoted as $1^n$) is the unique optimal solution. The ERT of (1+1)-EA solving OM (without noise) is $\Theta(n\log n)$~\cite{38}. For notational convenience, $|x|_0$ will be used to represent the number of 0s (namely the bits with value 0) of $x$.
\begin{definition}\label{def_onemax}
	The goal of the OM Problem with size $n$ is finding a binary string $x^*$ to maximize $f(x)=\sum^{n}_{i=1} x_i$.
\end{definition}

\subsection{(1+1) Evolutionary Algorithm}
(1+1)-EA reflects the general structure of EAs, and is widely analyzed to theoretically understand the behavior of EAs. Different from exact optimization, only a noisy fitness value $f^{\mathrm{n}}(x)$ can be obtained in noisy environments, and the value is a random variable because the noise may disturb the solution or the objective value randomly. For example, there are two kinds of widely used noise models: posterior and prior. The posterior noise comes from the variation on the fitness of a solution, e.g., $f^{\mathrm{n}}(x)=f(x)+\delta$, where $\delta$ is randomly drawn from some distribution. The prior noise comes from the variation on a solution, i.e., $f^{\mathrm{n}}(x)=f(x')$, where $x'$ is generated from $x$ by random perturbations. Therefore, line~5 in Algorithm~\ref{(1+1)-EA} changes from the true fitness ``$f(\cdot)$" to the noisy fitness ``$f^{\mathrm{n}}(\cdot)$". In the optimization process, reevaluation strategy, which evaluates both the offspring and parent solutions in each generation, is used as in~\cite{17,18,29}.
In the optimization procedure of an EA, fitness evaluations are the most time-consuming part, thus we will simply define its runtime as the number of objective evaluations. Termination condition is the finding of the optimum w.r.t. the exact objective~\cite{17,18,33}. In this work, we consider maximizing $f: \{0,1\}^n\rightarrow \mathbb{R}$.
\begin{algorithm}[h]\caption{(1+1)-EA}\label{(1+1)-EA} 
    \begin{algorithmic}[1]
    \STATE  Let $x$ be a uniformly chosen solution in $\{0,1\}^n$.
    \STATE  While the stopping criterion doesn't satisfy
    \STATE  \quad $z\leftarrow$ copy $x$.
    \STATE  \quad Change each bit of $z$ with a probability of $1/n$ independently.
    \STATE \quad If {$f^{\mathrm{n}}(z) \geq f^{\mathrm{n}}(x)$} \quad then $x\leftarrow z$.
    \end{algorithmic}
\end{algorithm}

\subsection{Median Sampling}
Mean sampling has often been used in noisy evolutionary optimization to tackle noise~\cite{5,7}. As described in Definition~\ref{mean sampling}, it uses the mean of $m$ independent evaluations to approximate the true fitness $f(x)$, where $m$ is called the sample size. By mean sampling, the output $\bar{f}(x)$ is close to the mathematical expectation of $f^\mathrm{n}(x)$.
As described in Definition~\ref{median sampling}, median sampling takes the median of $m$ independent evaluations to approximate the true fitness $f(x)$. By median sampling, the output $\hat{f}(x)$ is close to the 2-quantile of $f^\mathrm{n}(x)$, namely $\mathrm{P}(f^\mathrm{n}(x)\le \hat{f}(x))\approx 1/2$.
\begin{definition}[Mean Sampling]\label{mean sampling}
The objective value of $x$ is evaluated independently $m$ times, then
$$
	\bar{f}(x)=\sum^{m}_{i=1} \frac{f^{\mathrm{n}}_i(x)}{m}
$$
is output, where  $f^{\mathrm{n}}_1(x),f^{\mathrm{n}}_2(x),\ldots,f^{\mathrm{n}}_m(x)$ denote $m$ noisy fitness values.
\end{definition}
\begin{definition}[Median Sampling]\label{median sampling}
The objective value of $x$ is evaluated independently $m$ times, 
then 
	\begin{align*}
	\hat{f}(x)=\begin{cases}
	f^{\mathrm{n}}_{i_{(m+1)/2}}(x) & \text{if $m$ is odd},\\
	\big(f^{\mathrm{n}}_{i_{m/2}}(x)+f^{\mathrm{n}}_{i_{m/2+1}}(x)\big)/2 & \text{if $m$ is even}.
	\end{cases}
	\end{align*}
is output, where $f^{\mathrm{n}}_{i_1}(x)\le f^{\mathrm{n}}_{i_2}(x)\le\ldots\le f^{\mathrm{n}}_{i_m}(x)$ denote the ordered noisy fitness values. 
\end{definition}
When mean (or median) sampling is used, line~5 in Algorithm~\ref{(1+1)-EA} becomes ``$\bar{f}(\cdot) $" (or ``$\hat{f}(\cdot)$"). For both of the sampling strategies, $m=1$ means that sampling is not used.

\subsection{Analysis Tool}
It is straightforward to model the evolutionary optimization procedure as a Markov chain $\{\xi_t\}^{+\infty}_{t=0}$, because the subsequent procedure only depends on the current state. For (1+1)-EA optimizing OM, we can simply set the chain's state space as $\{0,1\}^n$ and the optimal state as $1^n$ (namely $\xi_t \in \mathcal{X}=\{0,1\}^n$ and $\mathcal{X}^*=\{1^n\}$). 
The \emph{first hitting time} (FHT) of $\{\xi_t\}^{\infty}_{t=0}$ is $\tau=\min\{t \mid \xi_{t} \in \mathcal{X}^*,t\geq0\}$. If the chain's initial state is $\xi_{0}=x$, then its \emph{expected FHT} (EFHT)  is denoted as $\mathrm{E}(\tau \mid \xi_0=x)=\sum^{\infty}_{t=0} t\cdot\mathrm{P}(\tau=t \mid \xi_0=x)$. If $\xi_0$ obeys a distribution $\pi_{0}$ (denoted as $\xi_{0}\!\sim\! \pi_0$), then its EFHT is defined as $\mathrm{E}(\tau \mid \xi_{0}\!\sim\! \pi_0) \!= \!\sum_{x\in \mathcal{X}} \pi_{0}(x)\mathrm{E}(\tau \mid \xi_{0}=x)$. 
For (1+1)-EA, the initial solution is evaluated once, then in each iteration, the parent solution and the offspring solution both need to be evaluated. Note that the initial solution is generated randomly from $\{0,1\}^n$, thus the ERT of (1+1)-EA is $1+2\cdot \mathrm{E}(\tau \mid \xi_{0} \sim \pi_0)$, where $\pi_0$ denotes the uniform distribution.
For (1+1)-EA using sampling, the ERT becomes $m+2m\cdot \mathrm{E}(\tau \mid \xi_{0} \sim \pi_0)$, because it needs to perform $m$ independent evaluations for each solution. 

As presented in Theorem~\ref{additive-drift}, the additive drift theorem aims to derive upper bounds of EFHT. To use the approach, first we need to design a function $V(\cdot)$ as a measurement for the difference between a state and the optimal state, and $V(\cdot)$ should satisfy $V(x)=0$ for any optimal $x$ and $V(x)>0$ otherwise. Next, we need deriving a lower bound $c$ for $\mathrm{E}(V(\xi_t)-V(\xi_{t+1})|\xi_t)$, i.e., the progress towards $\mathcal{X^*}$ in each generation. Finally, we can upper bound EFHT through dividing $V(\xi_0)$ by $c$.  When the context is clear, $V(\xi_t)/V(\xi_{t+1})$ will be briefly denoted as $V_t/V_{t+1}$. 
\begin{theorem}[Additive drift~\cite{39}] \label{additive-drift}
Given $\{\xi_t\}_{t=0}^{+\infty}$ and $V(\cdot)$, if $\exists c>0$ such that $\forall t\ge 0$ and $\forall \xi_t$ with $V_t>0$, 
	\[
	\mathrm{E}(V_t-V_{t+1}|\xi_t)\ge c,
	\]
	then $\mathrm{E}(\tau|\xi_0)\le V(\xi_0)/c$.
\end{theorem}

\section{The Robustness of Median Sampling Against Onebit Noise}
Onebit noise is commonly used in theoretical analyses~\cite{17,18,26,27}. With probability $p$, it changes a uniformly selected bit in $x$ before $x$ is evaluated.
For OM under such noise model, the ERT of (1+1)-EA is superpolynomial for $p=\omega(\log n/n)$ \cite{17}; the ERT is polynomial $\forall p\in [0,1]$ if using mean sampling with $m=4n^3$~\cite{27}. Theorem~\ref{theo-median-onebit} shows that the ERT is polynomial if using median sampling with $m=2n^3+1$, which illustrates that median sampling can efficiently tackle noise.
\begin{definition}[Onebit Noise]\label{onebit-noise}
Suppose $f^{\mathrm{n}}(\cdot)/f(\cdot)$ denotes the noisy/true objective function. Then
\begin{align*}
\mathrm{P}(f^{\mathrm{n}}(x)=f(x))=1-p,\quad
\mathrm{P}(f^{\mathrm{n}}(x)=f(y))=p,
\end{align*}
where $p \in [0,1]$ and $y$ is derived by changing a uniformly selected bit in $x$.
\end{definition}

To prove Theorem~\ref{theo-median-onebit}, we present Lemma~\ref{lem-median-onebit} to analyze $\hat{f}(x)$ under onebit noise by taking a sample size of $2n^3+1$. It intuitively means $\hat{f}(x)$ is close to the 2-quantile of $f^\mathrm{n}(x)$ w.h.p.
\begin{lemma}\label{lem-median-onebit}
 Under onebit noise, if median sampling with $m=2n^3+1$ is used, then
\begin{enumerate}
	\item[(i)] if $p\cdot \frac{n-|x|_0}{n}\ge \frac{1}{2}+\frac{\Omega(1)}{n}$, then  $\mathrm{P}(\hat{f}(x)=n-|x|_0-1)\ge 1-e^{-\Omega(n)}$;
	\item[(ii)] if $p\cdot \frac{|x|_0}{n}\ge \frac{1}{2}+\frac{\Omega(1)}{n}$, then $\mathrm{P}(\hat{f}(x)=n-|x|_0+1)\ge 1-e^{-\Omega(n)}$;
	\item[(iii)] if $p\cdot \frac{|x|_0}{n}\le \frac{1}{2}-\frac{\Omega(1)}{n}$, then $\mathrm{P}(\hat{f}(x)\le n-|x|_0)\ge 1-e^{-\Omega(n)}$;
	furthermore, if 
	$p\cdot \frac{n-|x|_0}{n}\le \frac{1}{2}-\frac{\Omega(1)}{n}$ also holds, then $\mathrm{P}(\hat{f}(x)=n-|x|_0)\ge 1-e^{-\Omega(n)}$.
\end{enumerate}
\end{lemma}
\begin{proof}
First we consider (i). Suppose $ |x|_0=i$ and $(n-i)p/n\ge 1/2+c/n$ for a constant $c$. Suppose $s$ denotes the number of noisy evaluations satisfying $f^{\mathrm{n}}(x)=n-1-i $ in $m$ independent noisy evaluations. Observe that in each evaluation, $\mathrm{P}(f^{\mathrm{n}}(x)=n-1-i)=(n-i)p/n$, thus  $\mathrm{E}(s)=m(n-i)p/n \ge m(1/2+c/n) \ge m/2+cn^2$. Then we get
\begin{equation}
\begin{aligned}
\mathrm{P}\left(s\le m/2\right)
& = \mathrm{P}\left(s\le m/2+cn^2-cn^2\right)\le \mathrm{P}\left(s\le \mathrm{E}(s)-cn^2\right) \\
& \le \mathrm{P}\left(|s-\mathrm{E}(s)|\ge cn^2\right) \le 2e^{-2c^2n^4/m}=e^{-\Omega(n)},
\end{aligned}
\end{equation}
where the last inequality is derived according to Hoeffding's inequality. Therefore, 
\begin{equation}\label{eq-r}
\mathrm{P}(s> m/2)\ge 1-e^{-\Omega(n)}.
\end{equation}
By the definition of median sampling,  $\mathrm{P}(\hat{f}(x)=n-|x|_0-1)\ge 1-e^{-\Omega(n)}$, thus the claim holds. We can similarly prove (ii).\\
Now we consider (iii). Under onebit noise, $f^\mathrm{n}(x)$ can take at most three values (i.e., $n-1-|x|_0$, $n-|x|_0$, $n+1-|x|_0$), thus $\hat{f}(x)$ can only take one of the three values by the definition of median sampling and $m=2n^3+1$. Note that $\mathrm{P}(f^{\mathrm{n}}(x)\le n-|x|_0)=1-\mathrm{P}(f^{\mathrm{n}}(x)=n+1-|x|_0)=1-|x|_0p/n\ge 1/2+\Omega(1)/n$, then similar to case~(i),  $\mathrm{P}(\hat{f}(x)\le n-|x|_0)\ge 1-e^{-\Omega(n)}$. \\
Then we consider the ``furthermore" clause. By $\mathrm{P}(f^{\mathrm{n}}(x)\ge n-|x|_0)=1-\mathrm{P}(f^{\mathrm{n}}(x)=n-1-|x|_0)=1-(n-|x|_0)p/n\ge 1/2+\Omega(1)/n$, we also derive $\mathrm{P}(\hat{f}(x)\ge n-|x|_0)\ge 1-e^{-\Omega(n)}$. Thus, $\mathrm{P}(\hat{f}(x)= n-|x|_0)=1-\mathrm{P}(\hat{f}(x)>n-|x|_0)-\mathrm{P}(\hat{f}(x)<n-|x|_0)\ge 1-e^{-\Omega(n)}$, i.e., the claim holds.\\
Combining the above analysis, the Lemma holds.
\qed $\blacksquare$
\end{proof}

\begin{theorem}\label{theo-median-onebit}
For OM under onebit noise, the ERT of (1+1)-EA employing median sampling with $m=2n^3+1$ is polynomial.
\end{theorem}
\begin{proof}
The main idea is applying Theorem~\ref{additive-drift}. We consider three cases for $p$ and in each case, we will design a distance function $V(x)$ and we need to examine $\mathrm{E}(V_t-V_{t+1} \mid \xi_t=x)$ for $x\neq 1^n$. Suppose $|x|_0=i$, $1 \leq i \leq n$. For ease of notation, let $\mathrm{P}_{\rm mut}(x,z)=\mathrm{P}$($z$ is mutated from $x$), and  $\mathrm{P}_{\rm acc}(x,z)=\mathrm{P}(\hat{f}(z) \geq \hat{f}(x))$. For ease of analysis, the drift is divided into $\mathrm{E}_1$ and $\mathrm{E}_2$. That is,
\begin{equation}	\mathrm{E}(V_t-V_{t+1}\mid \xi_t=x)=\mathrm{E}_1 - \mathrm{E}_2,\vspace{-0.3em}
\end{equation}
where
\begin{equation}
\mathrm{E}_1=\sum_{|z|_0<i} \mathrm{P}_{\rm mut}(x,z)\cdot \mathrm{P}_{\rm acc}(x,z) \cdot (V(x)-V(z)),\vspace{-0.5em}
\end{equation}
\begin{equation}
\mathrm{E}_2=\sum_{|z|_0>i} \mathrm{P}_{\rm mut}(x,z)\cdot \mathrm{P}_{\rm acc}(x,z) \cdot (V(z)-V(x)).
\end{equation}

(1) $p\le n/(2(n+1))$. $V(x)$ is designed to be $|x|_0$, namely the number of 0s in $x$.\\
For $\mathrm{E}_1$, we consider mutating only one zero bit in $x$  (namely $|z|_0=i-1$), and its probability is $i/n\cdot (1-1/n)^{n-1}\ge i/(en)$. Then $z$ will replace $x$ if $\hat{f}(z)=n+1-i$ and $\hat{f}(x)=n-i$. Conditions of  Lemma~\ref{lem-median-onebit}-(iii) hold because $p\le n/(2(n+1))=1/2-1/(2(n+1))$, then
\begin{equation}\label{pac-iii}
\mathrm{P}_{\rm acc}(x,z)
\ge \mathrm{P}(\hat{f}(z)=n+1-i,\hat{f}(x)=n-i)\ge 1-e^{-\Omega(n)}.
\end{equation} 
Thus, 
\begin{equation}\label{onebit-E+}
	\mathrm{E}_1\ge \frac{i}{en}\cdot (1-e^{-\Omega(n)})\ge \frac{1}{3n},
\end{equation}
where the last inequality is by $n$ is large enough.\\
For $\mathrm{E}_2$, we consider the increase of 0s. For $z$ satisfying $|z|_0>i$, accepting it implies $\hat{f}(z)\neq n-|z|_0$ or $\hat{f}(x)\neq n-i$. 
Note that conditions of (iii) in Lemma~\ref{lem-median-onebit} are satisfied, then we have
\begin{equation}\label{pac-iii-E-}
\mathrm{P}_{\rm acc}(x,z)\le e^{-\Omega(n)}.
\end{equation}
Thus, 
\begin{equation}\label{onebit-E-}
\mathrm{E}_2\le (n-i)\cdot e^{-\Omega(n)}\le e^{\log n}\cdot e^{-\Omega(n)}=e^{-\Omega(n)},
\end{equation}
where the last equality is by $n$ is large enough.\\
Subtract $ \mathrm{E}_2 $ from $\mathrm{E}_1 $, we get
\begin{equation}\label{onebit-E}
\mathrm{E}\left(V_t-V_{t+1}\mid \xi_t=x\right)\ge \Omega\left(\frac{1}{n}\right),
\end{equation}
where the last equation derives from large enough $n$. Therefore, by Theorem~\ref{additive-drift}, $\mathrm{E}(\tau|\xi_0)\le n/\Omega(1/n)=O(n^2)$, because $ V(x)\le n$. Note that each iteration needs $2m=4n^3+2$ fitness evaluations, we can derive a   polynomial ERT.

(2) $n/(2(n+1))<p<n/(n+7)$. The proof procedure is similar to case~(1), but the $V(x)$ is more complicated because the effect of the noise on a solution $x$ may vary as $|x|_0$ changes. The distance function is as follows:
\begin{align*}
V(x)=\begin{cases}
i &  \text{if $i>\frac{n}{2p}+3$ or $n-\frac{n}{2p}+3<i<\frac{n}{2p}-3$ or $i< \max\{1,n-\frac{n}{2p}-3\}$},\\
\frac{n}{2p} & \text{if $\frac{n}{2p}-3\le i\le \frac{n}{2p}+3$},\\
n-\frac{n}{2p}+2 & \text{if $\max\{1,n-\frac{n}{2p}-3\}\le i\le n-\frac{n}{2p}+3$}.
\end{cases}
\end{align*}
We consider five cases for $i$.\\
(2a) $i>n/(2p)+3$. \\
For $\mathrm{E}_1$, we also consider mutating only one zero bit in $x$, then $\mathrm{P}_{\rm mut}(x,z)\ge i/(en)$. Note that $|z|_0>n/(2p)+2$, thus $pi/n > p|z|_0/n\ge 1/2+2p/n$. 
By (ii) in Lemma~\ref{lem-median-onebit}, 
\begin{equation}\label{pac-ii}
\mathrm{P}_{\rm acc}(x,z)
\ge \mathrm{P}(\hat{f}(z)=n+2-i,\hat{f}(x)=n+1-i)\ge 1-e^{-\Omega(n)}.
\end{equation}
If $|z|_0>n/(2p)+3$, then $V(x)-V(z)=1$; else $V(x)-V(z)=i-n/(2p)>3$. Thus, $V(x)-V(z)\ge 1$ and  $\mathrm{E}_1= \Omega(1/n)$. \\
Now we consider $\mathrm{E}_2$. For $z$ satisfying $|z|_0>i$, $p|z|_0/n>pi/n>1/2+3p/n\ge 1/2+3/(2(n+1))$. Thus, by (ii) in Lemma~\ref{lem-median-onebit},
\begin{equation}\label{pac-ii-E-}
\mathrm{P}(\hat{f}(z)=n+1-|z|_0,\hat{f}(x)=n+1-i)\ge 1-e^{-\Omega(n)},
\end{equation} 
implying $\mathrm{P}_{\rm acc}(x,z)\le e^{-\Omega(n)}$. Accordingly, $\mathrm{E}_2\le e^{-\Omega(n)}$.\\
(2b) $n/(2p)-3\le i\le n/(2p)+3$. Note that $i\ge n/(2p)-3\ge (n+7)/2-3=\Omega(n)$.\\
First we consider the positive drift $\mathrm{E}_1$. By $n/(2p)-3-(n-n/(2p)+3)=n/p-n-6>1$, there always exists some $z$ such that $n-n/(2p)+3<|z|_0=\lceil n/(2p)-3\rceil-1$ and such $z$ can be mutated from $x$ by flipping at most seven 0s. Thus,
$$\mathrm{P}_{\rm mut}(x,z)\ge \binom{i}{7}\left(1-\frac{1}{n}\right)^{n-7} \left(\frac{1}{n}\right)^7=\Omega(1).$$
Note that $p(n-|z|_0)/n<1/2-3p/n$ and $p|z|_0/n<1/2-3p/n$, thus $\mathrm{P}(\hat{f}(z)=n-i)\ge 1-e^{-\Omega(n)}$ by (iii) in Lemma~\ref{lem-median-onebit}. Therefore, $z$ will replace $x$  with probability $1-e^{-\Omega(n)}$. Moreover, 
$$V(x)-V(z)\ge n/(2p)-|z|_0>n/(2p)-(n/(2p)-3)=3,$$ we have $\mathrm{E}_1= \Omega(1)$.\\ 
For $\mathrm{E}_2$, we consider $z$ with $|z|_0>i$. If $i>n/(2p)+1$, Eq.~\eqref{pac-ii-E-} holds and we have $\mathrm{P}_{\rm acc}(x,z)\le e^{-\Omega(n)}$, then we get $$\mathrm{E}_2\le (n-i)\cdot e^{-\Omega(n)}\le e^{\log n}\cdot e^{-\Omega(n)}=e^{-\Omega(n)},$$
where the last equality is by $n$ is large enough.\\
If $i\le n/(2p)+1$, then any $z$ satisfying $|z|_0\ge i+3$ will never be accepted under onebit noise. For $z$ satisfying $|z|_0\le i+2$, we have $|z|_0\le n/(2p)+3$, thus $V(z)=n/(2p)=V(x)$. Then $\mathrm{E}_2=0$. Combining the two cases for $z$, we get $\mathrm{E}_2\le e^{-\Omega(n)}$.\\
(2c) $n-n/(2p)+3<i<n/(2p)-3$. First we examine $\hat{f}(x)$. Note that $p(n-i)/n<1/2-3p/n$ and $pi/n<1/2-3p/n$, thus $\mathrm{P}(\hat{f}(x)=n-i)\ge 1-e^{-\Omega(n)}$ by (iii) in Lemma~\ref{lem-median-onebit}.\\
For $\mathrm{E}_1 $, we consider mutating only one zero bit in $x$, namely $|z|_0=i-1>n-n/(2p)+2$. Note that $p(n-|z|_0)/n<1/2-2p/n$ and $p|z|_0/n<1/2-3p/n$, we can derive Eq.~\eqref{pac-iii} and $\mathrm{E}_1= \Omega(1/n)$.\\
For $\mathrm{E}_2$, we consider two cases for $z$ satisfying $|z|_0>i$. If $|z|_0\ge i+2$, accepting $z$ implies $\hat{f}(x)=n-i-1$. Thus, $\mathrm{P}_{\rm acc}(x,z)\le e^{-\Omega(n)}$. If $|z|_0=i+1$, then $p(n-|z|_0)/n<1/2-4p/n$ and $p|z|_0/n<1/2-2p/n$, thus $\mathrm{P}(\hat{f}(z)=n-|z|_0)\ge 1-e^{-\Omega(n)}$. Then Eq.~\eqref{pac-iii-E-} still holds, i.e., $\mathrm{P}_{\rm acc}(x,z)\le e^{-\Omega(n)}$. Combining the two cases, $\mathrm{E}_2\le e^{-\Omega(n)}$.\\
(2d) $\max\{1,n-n/(2p)-3\}\le i\le n-n/(2p)+3$.\\
First we consider the positive drift $\mathrm{E}_1 $. Note that $$n-\frac{n}{2p}+3-\max\{1,n-\frac{n}{2p}-3\}\le n-\frac{n}{2p}+3-\left(n-\frac{n}{2p}-3\right)\le 6,$$
$x$ can generate an offspring $z$ with $|z|_0<\max\{1,n-n/(2p)-3\}$ by flipping at most seven bits, whose probability is at least $\Omega(1/n^7)$. Then we examine $\mathrm{P}_{\rm acc}(x,z)$. Because 
$$\frac{pi}{n}\le p\left(n-\frac{n}{2p}+3\right) \frac{1}{n}=p\frac{n+3}{n}-\frac{1}{2}\le \frac{n+3}{n+7}-\frac{1}{2}=\frac{1}{2}-\frac{4}{n+7},$$ we derive $\mathrm{P}(\hat{f}(x)\le n-i)\ge 1-e^{-\Omega(n)}$ by (iii) in Lemma~\ref{lem-median-onebit}. Note that $\hat{f}(z)\ge n-i$ always holds under onebit noise, we get $\mathrm{P}_{\rm acc}(x,z)\ge 1-e^{-\Omega(n)}$. If $1\le n-n/(2p)-3$, we get $$V(x)-V(z)\ge n-\frac{n}{2p}+2-\left(n-\frac{n}{2p}-3\right)\ge 5;$$ 
else $|z|_0=0$ and  
$$V(x)-V(z)\ge n-\frac{n}{2p}+2>1.$$ Thus, $\mathrm{E}_1=\Omega(1/n^7)$.\\
For $\mathrm{E}_2$, it is only necessary to take $z$ satisfying $|z|_0\le i+2$ into account because $z$ with $|z|_0\ge i+3$ will be rejected. If $i\ge n-n/(2p)+1$, we have $n-n/(2p)+1\le i< |z|_0\le n-n/(2p)+5$. Thus, 
$$p\cdot\frac{n-|z|_0}{n}<p\cdot \frac{n-i}{n}\le \frac{1}{2}-\frac{p}{n}$$ and $$p\cdot \frac{i}{n}< p\cdot \frac{|z|_0}{n}\le \frac{(n+5)p}{n}-\frac{1}{2}<\frac{n+5}{n+7}-\frac{1}{2}=\frac{1}{2}-\frac{2}{n+7},$$ which implies that Eq.~\eqref{pac-iii-E-} holds by (iii) in Lemma~\ref{lem-median-onebit}.
If $i< n-n/(2p)+1$, we have $|z|_0<n-n/(2p)+3$. By $V(x)=V(z)=n-n/(2p)+2$, we have $\mathrm{E}_2=0$. Combining the  two cases, $\mathrm{E}_2\le e^{-\Omega(n)}$.\\
(2e) $i<\max\{1,n-n/(2p)-3\}$. If $1\ge n-n/(2p)-3$, then $i=0$, thus we only need to consider that $1< n-n/(2p)-3$, namely $i<n-n/(2p)-3$.\\
For $\mathrm{E}_1$, we consider mutating only one zero bit in $x$, i.e., $|z|_0=i-1$. Similar to the above analysis, $\mathrm{P}_{\rm mut}(x,z)\ge 1/(en)$. Note that $p(n-i)/n>1/2+3p/n$ , we derive $\mathrm{P}(\hat{f}(x)=n-i-1)\ge 1-e^{-\Omega(n)}$ by (i) in Lemma~\ref{lem-median-onebit}. Thus, $\mathrm{P}_{\rm acc}(x,z)\ge 1-e^{-\Omega(n)}$. Note that $V(x)-V(z)=i-|z|_0=1$, thus $\mathrm{E}_1= \Omega(1/n)$.\\
For $\mathrm{E}_2$, it is only necessary to take $z$ with $|z|_0\le i+2<n-n/(2p)-1$ into account. Note that $p(n-|z|_0)/n>1/2+p/n$, thus by Lemma~\ref{lem-median-onebit},  $\mathrm{P}(\hat{f}(z)=n-|z|_0-1)=1-e^{-\Omega(n)}$, implying that $\mathrm{P}_{\rm acc}(x,z)\le e^{-\Omega(n)}$. Then we have $\mathrm{E}_2\le e^{-\Omega(n)}$.\\
Combining the five cases, we have $\mathrm{E}_1\ge \Omega(1/n^7)$ and $\mathrm{E}_2\le e^{-\Omega(n)}$. By subtracting $ \mathrm{E}_2 $ from $\mathrm{E}_1 $, Eq.~\eqref{onebit-E} becomes
\begin{equation}
\mathrm{E}\left(V_t-V_{t+1}\mid \xi_t=x\right)=\Omega\left(\frac{1}{n^7}\right),
\end{equation}
and we can also derive a polynomial ERT.

(3) $p\ge n/(n+7)$. The effect of the noise changes when the level of the noise changes. Accordingly, we need to design a new distance function:
\begin{align*}
V(x)=\begin{cases}
i &  \text{if $i>\frac{n}{2p}+3$ or $i<n-\frac{n}{2p}-3$},\\
\frac{n}{2} & \text{if $n-\frac{n}{2p}-3\le i\le \frac{n}{2p}+3$}.
\end{cases}
\end{align*}
Next we consider three cases for $i$.\\
(3a) $i>n/(2p)+3$. The proof procedure is the same as case~(2a), except that ``$V(x)-V(z)=i-n/(2p)>3$" changes to $V(x)-V(z)=i-n/2>n/(2p)+3-n/2\ge 3$. We derive  $\mathrm{E}_1=\Omega(1/n)$ and $\mathrm{E}_2\le e^{-\Omega(n)}$.\\
(3b) $n-n/(2p)-3\le i\le n/(2p)+3$. Note that $i\ge n-n/(2p)-3=\Omega(n)$.\\
First we consider the positive drift $\mathrm{E}_1$. There exists some $z$ with $|z|_0=\lceil n-n/(2p)-3\rceil-1$ and such $z$ can be mutated from $x$ by flipping at most $n/(2p)+3-(n-n/(2p)-3)+1=n/p-n+6+1\le 14$ 0s. Thus,  $$\mathrm{P}_{\rm mut}(x,z)\ge \binom{i}{14}\left(1-\frac{1}{n}\right)^{n-14} \left(\frac{1}{n}\right)^{14}=\Omega(1).$$ If $i\le n-n/(2p)-1$, then $p(n-i)/n\ge 1/2+p/n$. Thus,  $\mathrm{P}_{\rm acc}(x,z)\ge \mathrm{P}(\hat{f}(x)=n-i-1)\ge 1-e^{-\Omega(n)}$ by (i) in Lemma~~\ref{lem-median-onebit}.
If $i> n-n/(2p)-1$, it can be verified that $z$ will always be accepted under onebit noise. Note that 
$$V(x)-V(z)\ge n/2-|z|_0\ge n/2-(n-n/(2p)-3)\ge 3.$$ Thus, we have $\mathrm{E}_1= \Omega(1)$.\\ 
For $\mathrm{E}_2$, the proof procedure is the same as that of case~(2b), except that ``$V(z)=n/(2p)=V(x)$" changes to $V(z)=n/2=V(x)$. Thus, we get $\mathrm{E}_2\le e^{-\Omega(n)}$.\\
(3c) $i<n-n/(2p)-3$. The analysis for $\mathrm{E}_1$ and $\mathrm{E}_2$ is the same as that of case~(2e), then we have $\mathrm{E}_1= \Omega(1/n)$ and $\mathrm{E}_2\le e^{-\Omega(n)}$.\\
Combining the three cases, we have $\mathrm{E}_1= \Omega(1/n)$ and $\mathrm{E}_2\le e^{-\Omega(n)}$. Subtract $ \mathrm{E}_2 $ from $\mathrm{E}_1 $, we get
\begin{equation}
\mathrm{E}\left(V_t-V_{t+1}\mid \xi_t=x\right)=\Omega\left(\frac{1}{n}\right),
\end{equation}
and we can also derive a polynomial ERT.
\qed $\blacksquare$
\end{proof}
By the above proof, we can give an intuitive explanation for the effectiveness of median sampling. For $x$ and $z$ which satisfy $f(x)\!>\!f(z)$, when the 2-quantile of $f^\mathrm{n}(x)$ is larger than that of $f^\mathrm{n}(z)$, $x$ will be estimated better than $z$ by median sampling w.h.p., implying a correct comparison.

\section{Cases Where Median Sampling is Better than Mean Sampling}\label{compare}
For OM under segmented noise (Definition~\ref{def-seg-noise}), we show that (1+1)-EA
equipped with median sampling can do better than (1+1)-EA using mean sampling. The segmented noise is from \cite{25}, but we make a little modification to simplify the analysis. As presented in Definition~\ref{def-seg-noise}, the noisy evaluation of a solution $x$ can be divided into three segments. The objective evaluation is  accurate in the first segment, but inaccurate in other segments  because of noise.
We show that for OM under segmented noise, the ERT of (1+1)-EA using mean sampling is exponential (i.e., Theorem~\ref{theo-mean-segment}); and the ERT of (1+1)-EA employing median sampling with $m=2n^3+1$ is polynomial (i.e., Theorem~\ref{theo-median-segment}). The analyses show that  median sampling can be better if the 2-quantile increases with the true fitness.
\begin{definition}\label{def-seg-noise}
	 $\forall x\in \{0,1\}^n$, its noisy objective $f^{\mathrm{n}}(\cdot)$ is defined as follows:\\
	(1) if $|x|_0>\frac{n}{50}$, 
	$f^{\mathrm{n}}(x)=n-|x|_0$;\\
	(2) if $\frac{n}{100}< |x|_0\leq \frac{n}{50}$, \begin{align*}
	\mathrm{P}(f^{\mathrm{n}}(x)=
	n-|x|_0) =\frac{1}{2}+\frac{1}{n},\quad 
	\mathrm{P}(f^{\mathrm{n}}(x)=3n+|x|_0)=\frac{1}{2}-\frac{1}{n};
	\end{align*}
	(3) if $|x|_0\leq \frac{n}{100}$,
	 \begin{align*}
	\mathrm{P}(f^{\mathrm{n}}(x)=	4n(n-|x|_0))=1-\frac{1}{n},\quad 
	\mathrm{P}(f^{\mathrm{n}}(x)=(2n+|x|_0)^3)=\frac{1}{n};
	\end{align*}
	where $n/100\in \mathbb{N}^+$.
\end{definition}

Theorem~\ref{theo-mean-segment} shows that mean sampling fails under segmented noise and the reason is similar to that found in \cite{25}. Consider $x$ and $z$ satisfying $|z|_0=|x|_0+1$. In segment~(2), a small sample cannot eliminate the impact of noise, and $\mathrm{P}(\bar{f}(x) \leq \bar{f}(z))$ is still very large. In segment~(3), the expected gap between $f^{\mathrm{n}}(z)$ and $f^{\mathrm{n}}(x)$ is positive. Therefore, a larger sample size will enlarge $\mathrm{P}(\bar{f}(x) \leq \bar{f}(z))$ and performs worse; moreover, no medium sample size makes a good tradeoff. Therefore, mean sampling fails. Its rigorous proof can be derived directly from Theorem~5.2 in \cite{25}, because the change of noise doesn't affect the proof.
\begin{theorem}\label{theo-mean-segment}
	For OM under segmented noise, the ERT of (1+1)-EA employing mean sampling is exponential.
\end{theorem}

To prove Theorem~\ref{theo-median-segment}, Lemma~\ref{lemma-upper} is used. This lemma can upper bound the runtime, when the true better solution has a large probability to be recognized as better. Note that $x^j$ denotes some solution with $j$ 0s, and $F(\cdot)$ denotes the estimated fitness of a solution.
\begin{lemma}[\!\!\cite{18}]\label{lemma-upper}
The EFHT of (1+1)-EA solving noisy OM is polynomial if
	\begin{align}
	&\forall 0<i\le j:\mathrm{P}(F(x^j)\ge F(x^{i-1})) \le \frac{\log n}{15n}.
	\end{align}
	
\end{lemma}
We also present Lemma~\ref{lem-median-segment} to analyze $\hat{f}(x)$ under segmented noise by taking a sample size of $2n^3+1$. 
\begin{lemma}\label{lem-median-segment}
Under segmented noise, if median sampling with $m=2n^3+1$ is used, then
$\mathrm{P}(\hat{f}(x)=n-|x|_0)=1-e^{-\Omega(n)}$ if $\frac{n}{100}< |x|_0\leq \frac{n}{50}$; $\mathrm{P}(\hat{f}(x)=4n(n-|x|_0))=1-e^{-\Omega(n)}$ if $|x|_0\leq \frac{n}{100}$.
\end{lemma}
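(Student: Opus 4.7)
The plan is to mirror the proof of Lemma~\ref{lem-median-one-bit}, treating each of the two segments as a Bernoulli counting problem for which Hoeffding's inequality gives an exponentially small failure probability. In both cases $f^\mathrm{n}(x)$ takes only two possible values, so by the definition of median sampling with an odd sample size $m=2n^3+1$, the median $\hat{f}(x)$ equals the smaller of the two values exactly when strictly more than $m/2$ of the $m$ independent evaluations produce that smaller value.

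First I would handle the segment $n/100 < |x|_0 \le n/50$. Here the two possible noisy values are $n-|x|_0$ and $3n+|x|_0$, with $n-|x|_0$ being the strictly smaller one, and a single evaluation yields $n-|x|_0$ with probability $1/2+1/n$. Letting $r$ denote the number of evaluations (out of $m$) that return $n-|x|_0$, I would compute $\mathrm{E}(r)=(1/2+1/n)\cdot m = m/2 + 2n^2 + 1/n$. Then Hoeffding's inequality gives
\begin{equation*}
\mathrm{P}(r \le m/2) \le \mathrm{P}\bigl(|r-\mathrm{E}(r)| \ge 2n^2\bigr) \le 2e^{-2(2n^2)^2/m} = e^{-\Omega(n)},
\end{equation*}
so with probability $1-e^{-\Omega(n)}$ more than half of the samples equal $n-|x|_0$, forcing $\hat{f}(x)=n-|x|_0$.

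Next I would handle the segment $|x|_0 \le n/100$. Here the two possible noisy values are $4n(n-|x|_0)$ and $(2n+|x|_0)^3$; the former is at most $4n^2$ while the latter is at least $8n^3$, so $4n(n-|x|_0)$ is strictly smaller. A single evaluation returns $4n(n-|x|_0)$ with probability $1-1/n$. Letting $r$ denote the number of such evaluations, $\mathrm{E}(r)=(1-1/n)m$, and for $n$ large enough, $\mathrm{E}(r)-m/2 = m/2 - m/n = \Omega(n^3)$. Applying Hoeffding's inequality analogously bounds $\mathrm{P}(r\le m/2)$ by $e^{-\Omega(n^3)}=e^{-\Omega(n)}$, so $\hat{f}(x)=4n(n-|x|_0)$ with probability $1-e^{-\Omega(n)}$.

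I do not anticipate a real obstacle: the argument is routine concentration, closely parallel to the proof of Lemma~\ref{lem-median-one-bit}(iii). The only mild subtlety is verifying in case~(3) that $4n(n-|x|_0) < (2n+|x|_0)^3$ for all $|x|_0\le n/100$, so that the ``smaller value'' interpretation of the median is correct; this is an elementary numerical comparison. Combining the two cases yields the claim.
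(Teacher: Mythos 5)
Your proposal is correct and follows essentially the same route as the paper's own proof: counting the number $r$ of evaluations returning the majority value, applying Hoeffding's inequality to show $r>m/2$ with probability $1-e^{-\Omega(n)}$, and concluding via the definition of median sampling with odd $m$. Your write-up is in fact more detailed than the paper's (which simply invokes the argument of Lemma~\ref{lem-median-one-bit}), including the explicit check that $4n(n-|x|_0)<(2n+|x|_0)^3$.
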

\begin{proof}
	The main procedure is analogous to Lemma~\ref{lem-median-onebit}. If $\frac{n}{100}< |x|_0\leq \frac{n}{50}$, suppose there are $s$ noisy evaluations where $f^{\mathrm{n}}(x)=n-|x|_0 $ in $m$ independent noisy evaluations. Then Eq.~\eqref{eq-r} also holds and  $\mathrm{P}(\hat{f}(x)=n-|x|_0)=1-e^{-\Omega(n)}$. If $|x|_0\leq \frac{n}{100}$, we similarly have $\mathrm{P}(\hat{f}(x)=4n(n-|x|_0))=1-e^{-\Omega(n)}$. Thus, the lemma holds.
\qed $\blacksquare$
\end{proof}
\begin{theorem}\label{theo-median-segment}
	For OM under segmented noise, the ERT of (1+1)-EA employing median sampling with $m=2n^3+1$ is polynomial.
\end{theorem}
\begin{proof}
	The main idea is applying Lemma~\ref{lemma-upper}. Given $0<i\le j$, let $g=\hat{f}(x^j)-\hat{f}(x^{i-1})$.
	To analyze $\mathrm{P}(g\ge 0)$, we consider four cases for $i$. \\
	(1) $i>\frac{n}{50}$. Note that $f^{\mathrm{n}}(x^j)=f(x^j)$ and $f^{\mathrm{n}}(x^{i-1})$ is larger by considering $i-1>\frac{n}{50}$ and $i-1\le\frac{n}{50}$, respectively. Therefore, we get $\mathrm{P}(g\ge 0) =0$. \\
	(2) $\frac{n}{100}+1< i\le\frac{n}{50}$. If $j>\frac{n}{50}$, we have $ \mathrm{P}(g\ge 0) =0 $, because $f^{\mathrm{n}}(x^{j})=n-j$ and $f^{\mathrm{n}}(x^{i-1})\ge n-i+1>n-j$. If $j\le \frac{n}{50}$, by Lemma~\ref{lem-median-segment}, we have $$\mathrm{P}(g<0)\ge \mathrm{P}\big(\hat{f}(x^j)=n-j,\hat{f}(x^{i-1})=n-i+1\big)=1-e^{-\Omega(n)}.$$
	(3) $i\le\frac{n}{100}+1$. The analysis is analogous to case~(2). If $j>\frac{n}{100}$, then $\mathrm{P}(g\ge 0)=0$. If $j\le\frac{n}{100}$, then 
	$$\mathrm{P}(g<0)\ge \mathrm{P}\big(\hat{f}(x^j)=4n(n-j),\hat{f}(x^{i-1})=4n(n-i+1)\big)=1-e^{-\Omega(n)}.$$
	Combining the three cases, we have shown $\forall 0<i\le j: \mathrm{P}(g\ge 0) \leq \log n/(15n)$ for sufficiently large $n$. Then, by Lemma~\ref{lemma-upper}, the EFHT is polynomial. In each iteration, the algorithm needs $2m=4n^3+2$ evaluations, thus the ERT is polynomial.
\qed $\blacksquare$
\end{proof}

\section{Cases Where Mean Sampling is Better than Median Sampling}\label{median-fail}
For OM under partial noise (Definition~\ref{def-part-noise}), we show that (1+1)-EA
using median sampling is sometimes worse than (1+1)-EA using mean sampling. For partial noise presented in Definition~\ref{def-part-noise},
a false objective value is returned when $|x|_0<n/2$. We prove that for OM under partial noise, the ERT of (1+1)-EA employing mean sampling with $m=n^3$ is polynomial (i.e., Theorem~\ref{theo-mean-part}); and the ERT of (1+1)-EA employing median sampling is exponential (i.e., Theorem~\ref{theo-median-part}). The analyses suggest that median sampling may fail if the 2-quantile of the noisy fitness doesn't increase with the true objective value, and it is better to choose other strategies. 
\begin{definition}\label{def-part-noise}
$\forall x\in \{0,1\}^n$, its noisy objective $f^{\mathrm{n}}(\cdot)$ is defined as follows:\\
	(1) if $|x|_0\ge \frac{n}{2}$, $f^{\mathrm{n}}(x)=
	n-|x|_0$;\\
	(2) if $|x|_0< \frac{n}{2}$, 
	$$\mathrm{P}(f^{\mathrm{n}}(x)=|x|_0/2)=2/3, \quad
	\mathrm{P}(f^{\mathrm{n}}(x)=2(n-|x|_0))=1/3.$$ 
\end{definition}
\begin{theorem}\label{theo-mean-part}
	For OM under partial noise, the ERT of (1+1)-EA employing mean sampling with $m=n^3$ is polynomial.
\end{theorem}
\begin{proof}
The main idea is applying Lemma~\ref{lemma-upper}. Given $0<i\le j$, let $g=\bar{f}(x^{i-1})-\bar{f}(x^j)$. To analyze $\mathrm{P}(g\ge 0)$, we classify $i$ into two cases.\\
	(1)  $i\ge \frac{n}{2}+1$. We have $\bar{f}(x^{i-1})=n-i+1$ and $\bar{f}(x^j)=n-j$, thus $\mathrm{P}(g\le 0)=0$.\\
	(2) $i<\frac{n}{2}+1$. First we need to derive $\mu:= \mathrm{E}(g)$. Note that $\mathrm{E}(\bar{f}(x^{i-1}))=\mathrm{E}(f^{\mathrm{n}}(x^{i-1}))=(i-1)/2\cdot 2/3+2(n-i+1)\cdot 1/3=(2n-i+1)/3$. We classify $j$ into two cases. (a) If $j\ge\frac{n}{2}$, then $\mathrm{E}(\bar{f}(x^j))=n-j$, thus 
	\begin{equation*}
	\mathrm{E}(g)=\frac{2n}{3}-\frac{i}{3}+\frac{1}{3}-n+j\ge -\frac{n}{3}+\frac{1}{3}+\frac{2j}{3}\ge \frac{1}{3}.
	\end{equation*}
	(b) If $j< \frac{n}{2}$, then $\mathrm{E}(\bar{f}(x^j))=(2n-j)/3$ and $\mathrm{E}(g)= (j-i+1)/3\ge 1/3$. Thus, we have $\mu\ge 1/3$. Then we have 
	\begin{equation*}
	\mathrm{P}(g\le 0)\le \mathrm{P}(|g-\mu|\ge \mu)\le 2e^{-2m\mu^2/(2n)^2}\le 2e^{-m/(18n^2)}=2e^{-n/18},
	\end{equation*} 
	where the second inequality holds by $|f^{\mathrm{n}}(x^{i-1})-f^{\mathrm{n}}(x^j)|\le 2n$ and
	Hoeffding's inequality. \\
	Similar to the discussion at the end of Theorem~\ref{theo-median-segment}, the ERT is polynomial.
\qed $\blacksquare$
\end{proof}
From the proof, we can derive an intuitively explanation for the effectiveness of mean sampling. For $x$ and $z$ satisfying $f(x)>f(z)$ (i.e., $z$ is worse), the expectation of $f^\mathrm{n}(x)$ is larger than $f^\mathrm{n}(z)$. Then, there is a small enough probability to accept $z$ if using mean sampling.  Thus, the search direction of (1+1)-EA will not be misled and the optimal solution can be quickly found.

To prove Theorem~\ref{theo-median-part}, we use Lemma~\ref{lemma-lower}~\cite{18}, which intuitively means that if a true worse solution (i.e., a solution with more 0s) is estimated better than a true better solution with some probability, then we can derive the lower bound for the runtime.
\begin{lemma}[\!\!\cite{18}]\label{lemma-lower}
	If there exists a real number $l\le n/4$ satisfying
	\begin{align}
	&\forall 0<i\le l:\mathrm{P}(F(x^i)<F(x^{i-1})) \le 1-\frac{16i}{n},
	\end{align}
	then  w.h.p., the FHT of (1+1)-EA solving noisy OM is $2^{\Omega(l)}$.
\end{lemma}
\begin{theorem}\label{theo-median-part}
For OM under partial noise, the ERT of (1+1)-EA employing median sampling is exponential.
\end{theorem}
\begin{proof}
	We use Lemma~\ref{lemma-lower} for the proof. Given $0<i<\frac{n}{2}$, let $g=\hat{f}(x^i)-\hat{f}(x^{i-1})$. 
	First we show that $\mathrm{P}(\hat{f}(x^{i-1})=(i-1)/2)\ge 1/3$ for $i< \frac{n}{2}$. Suppose $s$ denotes the number of noisy evaluations satisfying $f^{\mathrm{n}}(x^{i-1})=(i-1)/2 $ in $m$ independent noisy evaluations. We classify $m$ into 2 cases.\\
	(1) $m$ is even. Let \begin{align*}
	 A=\sum_{j=m/2+1}^{m}\binom{m}{j}\left(\frac{2}{3}\right)^j \left(\frac{1}{3}\right)^{m-j},
	B= \sum_{j=0}^{m/2-1}\binom{m}{j}\left(\frac{2}{3}\right)^j \left(\frac{1}{3}\right)^{m-j},
	C=\binom{m}{m/2}\left(\frac{2}{3}\right)^{m/2} \left(\frac{1}{3}\right)^{m/2}.
	\end{align*}
	Note that sum of the three items is 1, and $A\ge B$,
	\begin{equation*}
	A\ge \binom{m}{m/2+1}\left(\frac{2}{3}\right)^{m/2+1} \left(\frac{1}{3}\right)^{m/2-1}
	\ge C.
	\end{equation*}
	Thus, $\mathrm{P}(s\ge m/2+1)=A\ge 1/3$. By definition of median sampling, we derive $\mathrm{P}(\hat{f}(x^{i-1})=(i-1)/2)\ge 1/3$.\\
	(2) $m$ is odd. We have 
	\begin{equation*}
	\mathrm{P}(s\ge (m+1)/2)=\sum_{j=(m+1)/2}^{m}\binom{m}{j}\left(\frac{2}{3}\right)^j \left(\frac{1}{3}\right)^{m-j}\ge \sum_{j=0}^{(m-1)/2}\binom{m}{j}\left(\frac{2}{3}\right)^j \left(\frac{1}{3}\right)^{m-j}.
	\end{equation*}
	Thus, $\mathrm{P}(s\ge (m+1)/2)\ge 1/2$. By the definition of median sampling, we can derive that $\mathrm{P}(\hat{f}(x^{i-1})=(i-1)/2)\ge 1/2$.\\	
	To make $g\ge 0$, it is sufficient that $\hat{f}(x^{i-1})=(i-1)/2$ since it always holds that $\hat{f}(x^{i})\ge i/2$. Thus, $\mathrm{P}(g\ge 0)\ge 1/3$ for $i<\frac{n}{2}$. 
	Then, the condition of Lemma~\ref{lemma-lower} holds by setting $l=n/48$. Thus, the EFHT is $2^{\Omega(l)}=2^{\Omega(n)}$, i.e., exponential.
\qed $\blacksquare$
\end{proof}
From the analysis, we can give an intuitive explanation for the failure of median sampling. Consider $x$ and $z$ satisfying $|x|_0=|z|_0-1$ (that is, $z$ is worse), the 2-quantile of $f^\mathrm{n}(z)$ is larger than that of $f^\mathrm{n}(x)$, and $z$ will be estimated better than $x$ by median sampling w.h.p., implying a wrong comparison.

\section{Application Illustration}
In this section, we provide some guidance for employing median sampling in practice. The theoretical results have revealed that if the 2-quantile of the noisy fitness increases with the true fitness, we can use median sampling to tackle noise. Inspired by this finding, we may use the following three steps to check the effectiveness of median sampling in practice.
\begin{enumerate}
	\item  Find a sequence of solutions with increasing true objective values. Note that the solution space can be very large, and we only need to find some representative solutions. 
	The true fitness of a solution can be obtained by conducting evaluation accurately, instead of using an approximation. For example, a prediction model in machine learning can be evaluated using a large amount of data, and a structure in aerodynamic design can be evaluated by CFDs simulation. Note that the number of representative solutions is very limited, and the evaluation process can be easily parallelized, thus the computational cost is usually acceptable.  
	\item Find an appropriate sample size $m$, such that the 2-quantile of the noisy fitness increases with the sequence. If such sample size doesn't exist or the sample size is too large, it would be better to choose other strategies.
	\item If finding such a sample size, evaluate each solution $m$ times independently and output the median of the $m$ objective values as the estimated fitness during the optimization procedure.
\end{enumerate}
As an application illustration, we use (1+1)-EA to solve OM under onebit noise. It has been known that the ERT of (1+1)-EA solving OM under onebit noise is super-polynomial if the noise probability $p=\omega(\log n/n)$, thus we set $p=\log^2n/n$. We set the problem size $n=100$ and use $0^n,10^{n-1},\ldots,1^n$ as the sequence of solutions with increasing true fitness.
We select the sample size $m$ from $5,10,15,\ldots$, such that the 2-quantile of the noisy fitness increases with the sequence. Figure~\ref{median-seq-onebit} shows that it holds when $m=15$. Thus, using a sample size $m=15$ is probably enough to reduce the negative effect of noise.

To show the effectiveness of median sampling, we next compare the ERT of (1+1)-EA with and without median sampling for the problem size $n \in \{5,10, ... 100\}$. For each $n$, we run (1+1)-EA 100 times independently. In each run, we record the number of fitness evaluations until an optimal solution with respect to the true fitness function is found for the first time. The total number of evaluations of the 100 runs are averaged as the estimation of the ERT. The results are shown in Figure~\ref{median-ert-onebit}. It can be observed that though using median sampling needs to evaluate a solution $m$ times for estimating the fitness, the total number of evaluations required by (1+1)-EA to find the optimum is decreased drastically. 
\begin{figure}[!t]
	\centering
	\begin{minipage}[c]{0.48\textwidth}
		\centering
		\includegraphics[width=0.88\textwidth]{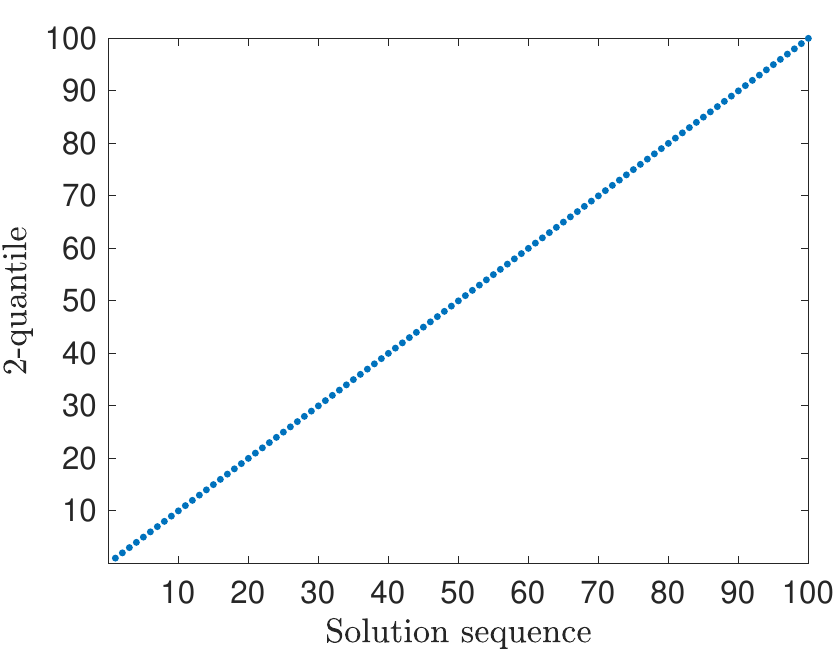}
	\end{minipage}
	\hspace{0.02\textwidth}
	\begin{minipage}[c]{0.48\textwidth}
		\centering
		\includegraphics[width=0.9\textwidth]{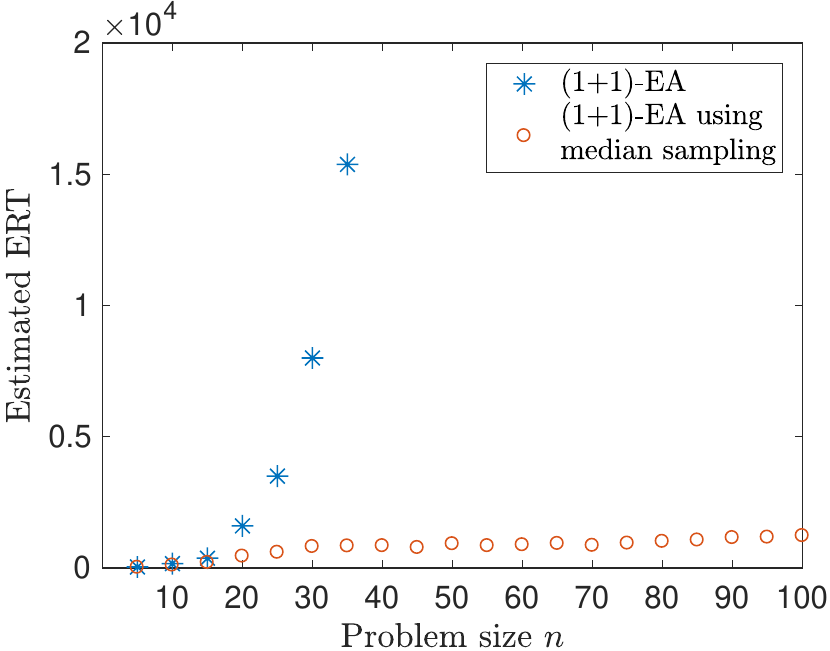}
	\end{minipage}\\[3mm]
	\begin{minipage}[t]{0.48\textwidth}
		\centering
		\caption{2-quantile of the noisy fitness of the sequence $0^n,10^{n-1},\ldots,1^n$. Note that $i$ in $x$-axis corresponds to the $(i+1)$-th solution $1^{i}0^{n-i}$ in the sequence.}
		\label{median-seq-onebit}
	\end{minipage}
	\hspace{0.02\textwidth}
	\begin{minipage}[t]{0.48\textwidth}
		\centering
		\caption{Estimated ERT for (1+1)-EA with and without median sampling on OneMax under onebit noise.}
		\label{median-ert-onebit}
	\end{minipage}
\end{figure}

\section{Conclusion}
In this paper, we introduce median sampling into EAs to handle noise and theoretically analyze the effectiveness of median sampling. We first consider  one classical case, i.e., OM under onebit noise, and show that median sampling can reduce the ERT of (1+1)-EA from exponential to polynomial. Next, by two illustrative examples, we show that when the 2-quantile of the noisy fitness increases with the true objective value, median sampling is better than the commonly used mean sampling; otherwise, it is worse. The results provide us with some guidance to employ median sampling in practice. In the future, it would be interesting to analyze the effect of median sampling on real-world noisy optimization problems.

\Acknowledgements{The authors want to thank the editor and anonymous reviewers for their helpful comments and suggestions, and one reviewer of our work~\cite{25}, whose comments motivate this work. This work was supported by the National Key Research and Development Program of China (2017YFB1003102), the NSFC (62022039, 61672478, 61876077), and the MOE University Scientific-Technological Innovation Plan Program.}

%%%%%%%%%%%%%%%%%%%%%%%%%%%%%%%%%%%%%%%%%%%%%%%%%%%%%%%
%%% Reference section
%%% citation in the content using "some words~\cite{1,2}".
%%% ~ is needed to make the reference number is on the same line with the word before it.
%%%%%%%%%%%%%%%%%%%%%%%%%%%%%%%%%%%%%%%%%%%%%%%%%%%%%%%

%%%%%%%%%%%%%%%%%%%%%%%%%%%%%%%%%%%%%%%%%%%%%%%%%%%%%%%
%%% Appendix sections
%%%%%%%%%%%%%%%%%%%%%%%%%%%%%%%%%%%%%%%%%%%%%%%%%%%%%%%
%\begin{appendix}
%\section{Name}

%\end{appendix}

\end{document}